\def\BibTeX{{\rm B\kern-.05em{\sc i\kern-.025em b}\kern-.08em
    T\kern-.1667em\lower.7ex\hbox{E}\kern-.125emX}}
\newtheorem{theorem}{Theorem}
\newtheorem{definition}{Definition}
\newtheorem{lemma}{Lemma}
\newtheorem{proof}{Proof}
\begin{document}
\title{SDOoop: Capturing Periodical Patterns and Out-of-phase Anomalies in Streaming Data Analysis}

\author{\IEEEauthorblockN{Alexander Hartl \orcidlink{0000-0003-4376-9605}}
\IEEEauthorblockA{\textit{Institute of Telecommunications} \\
\textit{TU Wien}\\
1040 Wien, Austria\\
me@alexhartl.eu}
\and
\IEEEauthorblockN{Félix Iglesias Vázquez \orcidlink{0000-0001-6081-969X}}
\IEEEauthorblockA{\textit{Institute of Telecommunications} \\
\textit{TU Wien}\\
1040 Wien, Austria\\
felix.iglesias@tuwien.ac.at}
\and
\IEEEauthorblockN{Tanja Zseby \orcidlink{0000-0002-5391-467X}}
\IEEEauthorblockA{\textit{Institute of Telecommunications} \\
\textit{TU Wien}\\
1040 Wien, Austria\\
tanja.zseby@tuwien.ac.at}
}

\maketitle

\begin{abstract}
Streaming data analysis is increasingly required in applications, e.g., IoT, cybersecurity, robotics, mechatronics or cyber-physical systems. 
Despite its relevance, it is still an emerging field with open challenges. 
SDO is a recent anomaly detection method designed to 
meet requirements of speed, interpretability and intuitive parameterization. In this work, we present SDOoop, which extends the capabilities of SDO's streaming version to 
retain temporal information of data structures. SDOoop spots contextual anomalies undetectable by traditional algorithms, while enabling the inspection of data geometries, clusters and temporal patterns. 
We used SDOoop to model real network communications in critical infrastructures and extract patterns that disclose their dynamics. 
Moreover, we evaluated SDOoop with data from intrusion detection and natural science domains and obtained performances equivalent or superior to state-of-the-art approaches. 
Our results show the high potential of new model-based methods to analyze and explain streaming data.
Since SDOoop operates with constant per-sample space and time complexity, it is ideal for big data, being able to instantly process large volumes of information. 
SDOoop conforms to next-generation machine learning, which, in addition to accuracy and speed, is expected to provide highly interpretable and informative models.  
\end{abstract}

\begin{IEEEkeywords}
Contextual Anomalies, Streaming Data Analysis, Anomaly Detection, Communication Networks, Critical Infrastructures
\end{IEEEkeywords}

\section{Introduction}
In data stream processing, data points $\boldsymbol v_j \in \mathbb R ^D$ consistently arrive at monotonically increasing times $t_j \in \mathbb R$ for $j=1,2,\ldots$ 
Due to this steady acquisition,
analysis algorithms face the challenge of discovering knowledge in unbounded data that substantially accumulates in a short time.
In such a context, real-life applications dismiss batch-mode operation while demanding fast online processing able to update models and parameters to \emph{concept drift}.
Here, ``updating models and parameters'' does not only mean adapting to new patterns and classes, but also forgetting those that have become obsolete.

In anomaly/outlier detection (OD), 
we commonly set a sliding window (or an observation horizon) $w$ that establishes the memory length for which space geometries are remembered. 
Hence, the anomaly is defined: (a) either based on the neighborhood of a data point within $w$ (e.g., Exact- and Approx-STORM~\cite{Angiulli2007}), or (b) by comparing to a model that evolves with $w$ (e.g., Robust Random Cut Forests \cite{Guha2016}). 
In both cases, note that the comparison reference is purely static (or geometric) relative to the point of comparison at the instant of comparison. That is, the data within $w$ (or the model used instead) is a snapshot.  
While this is not a problem for many types of anomalies, most traditional methods are blind to identify contextual anomalies. A contextual (aka. conditional or out-of-phase) anomaly ``occurs if a point deviates in its local context'' \cite{Ruff2021}, i.e., if it happens outside its usual time. Consider a method whose $w$ spans a one-week period. If a cluster occurs exclusively during weekends, but a data point of this cluster accidentally appears on a Wednesday, this method \emph{will not} identify it as an anomaly, but as a normal inlier instead. 

\begin{figure}[b]
	\centering
	\includegraphics[width=0.9\columnwidth]{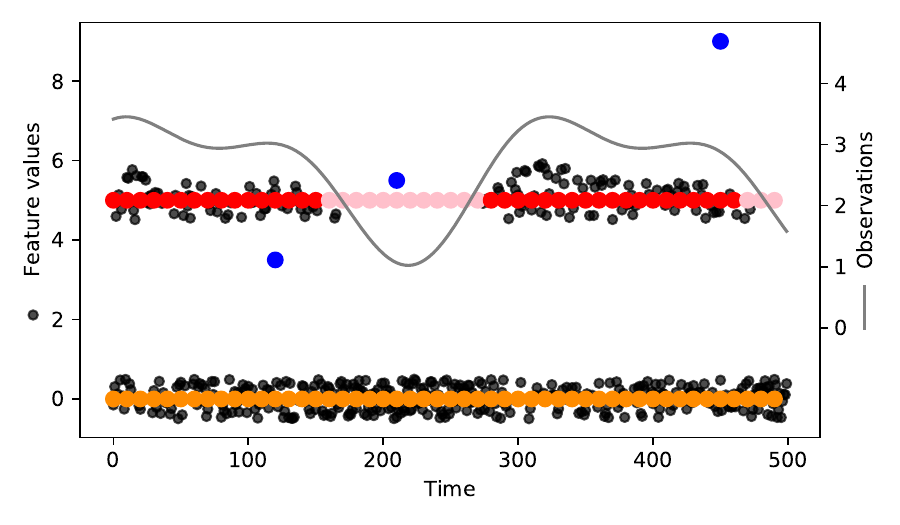}
	\caption{Example of a data stream, a model with two observers ({red} and {orange}), and three types of anomalies~({blue}): local (left), contextual (middle), global (right).}
	\label{fig:obs}
\end{figure}

Here, we present SDOoop 
(SDO out-of-phase), an algorithm for OD in streaming data whose models 
store temporal information.  
While retaining constant per-sample space and time complexity and keeping intact the functionalities to detect other types of anomalies,
SDOoop is also able to identify contextual anomalies and capture periodical patterns that explain the time behavior of the data bulk.
SDOoop builds models 
by sampling a fixed number of data points at representative locations in feature space, called \emph{observers}.
To escape dependence of the data volume, it uses an exponentially weighted moving average (EWMA) to estimate model information from the arriving data mass. At the same time, observers  hold temporal information as coefficients of Fourier transforms (FT). 
Thus, for a specific time of interest $t$, observers ``twinkle'' to show only the most representative model for time $t$.
The simple example in Fig. \ref{fig:obs} can give an intuition of the distinctive capabilities of SDOoop when compared with alternatives. In the figure, the internal model (incrementally updated) is formed by the red and orange points, which represent observers. 
Observers are placed in areas of considerable density to see the data mass around. Note that, while the orange observer stands for a cluster that occurs at a continuous pace, the red observer represents a cluster that exhibits a temporal behavior. Consequently, the red observer \emph{twinkles} accordingly, the drawn gray curve showing the inverse FT of its captured FT coefficients.  
If a contextual anomaly happens, closest observers will not be awake, hence it will be detected as an anomaly by far observers.

Our work advances the research on observers-based unsupervised learning, which originated SDO~\cite{Iglesias2018}, SDOstream~\cite{Hartl2019} and SDOclust~\cite{Iglesias2023}.  
The remainder of this paper is structured as follows:
In Section \ref{sec:preliminaries}, we introduce observers-based OD.
Section \ref{sec:method} describes SDOoop and explains its parameters. 
SDOoop is evaluated in Section \ref{sec:experiments}. 
In Section~\ref{sec:related_work} we explore related research efforts and contrast them to the problem solved here.
Finally, we summarize the main ideas and contributions in Section \ref{sec:conclusion}. 
To enable reproducibility, we make all our source code available in our repository 
\url{https://github.com/CN-TU/tpsdos-experiments}.

\section{Observers-based Outlier Detection} \label{sec:preliminaries}
The Sparse Data Observers~(SDO) method~\cite{Iglesias2018} for OD on static datasets is the foundation of our current proposal. 
In a nutshell, SDO works as follows:
(a) Randomly sample points from the dataset, which will be called ``observers''.
(b) Each data point in the dataset is \emph{only} observed by the $x$ nearest observers, resulting in each observer performing a different number of observations. 
(c) Remove \emph{idle} observers, i.e., with the smallest number of observations. Remaining observers are termed \emph{active}. 
(d) For each data point, compute an outlier score as median distance to the $x$ nearest \emph{active} observers.
Hence, observers capture the main shapes of the data in a low-density model and outlier scores are calculated as distances to points in this model. Removing idle observers minimizes the chances of outliers being part of the model.  
The observer-approach also holds for SDOstream~\cite{Hartl2019}, which adapts the algorithm for a streaming setting by continuously sampling new observers and using EWMA for computing observations.

In this paper, we predominantly adhere to the notation of~\cite{Hartl2019}. Hence, we denote the observers set as $\Omega$ and, accepting a slight abuse of notation, we denote by $\boldsymbol{\omega} \in \Omega$ both an abstract observer and its feature vector. Furthermore, $P_{\boldsymbol{\omega}}$ denotes $\boldsymbol{\omega}$'s observations, where $P_{\boldsymbol \omega} \in \mathbb N_0$ for SDO and $P_{\boldsymbol \omega} \in \mathbb R_0^+$ for SDOstream. Hence, $P_{\boldsymbol{\omega}}$  counts the number of data points for which $\boldsymbol{\omega}$ belongs to the $x$ nearest observers with an algorithm parameter $x \in \mathbb N$. Observers with insufficient $P_{\boldsymbol{\omega}}$ are thus disregarded for outlier scoring.
In contrast to SDO and SDOstream, SDOoop replaces the number of observations $P_{\boldsymbol{\omega}}$ with a temporal function, allowing active observers to become temporarily idle (i.e., \emph{asleep}) and reappear dynamically in accordance with the temporal pattern of the underlying clusters. Therefore, it is possible to construct an active observers set representative for the data stream at the current time and, hence,
to detect data points that do not meet the established temporal pattern, i.e., contextual anomalies.

\section{SDOoop} \label{sec:method}
\label{sec:oop}

\begin{table}[t]
	
	\caption{Symbols and notation}
	\label{tab:notation}
	\vspace*{-1mm}\rule{\columnwidth}{\heavyrulewidth}
	\begin{tabular}{l r@{\hspace{1mm}}l@{\hspace{2mm}}l}  \\[-2mm]
		\multirow{6}{*}{\rotatebox{90}{\parbox[c]{1.5cm}{\centering Algorithm Parameters}}} & $k$ & $\in \mathbb N$ & Number of observers. \\
		& $x$ & $\in \mathbb N$ & Number of nearest observers.\\
		& $T$ & $\in \mathbb R^+$ & EWMA time constant. \\
		& $T_0$ & $\in \mathbb R^+$ & FT base period.\\
		& $N_{\text{bins}}$ & $\in \mathbb N$ & Number of frequency bins.\\
		& $q_{\text{id}}$ & $\in [0,1]$ & Observer idle-active fraction.\\[3mm]
		
		\multirow{4}{*}{\rotatebox{90}{\parbox[c]{1.5cm}{\centering Algorithm State}}} & $\Omega$ &  & Observers set.\\
		& $P_{\boldsymbol{\omega},n}$ & $\in \mathbb C$ & Fourier coefficients for $\boldsymbol{\omega}$'s observations.\\
		& $H_{\boldsymbol{\omega}}$ & $\in \mathbb R^+$ & Reference for age-normalization of $P_{\boldsymbol{\omega},0}$. \\
		& $i_{\text{LAO}}$ & $\in \mathbb N$ & Index of last added observer.\\[3mm]
		
		\multirow{5}{*}{\rotatebox{90}{\parbox[c]{2.5cm}{\centering Further Notation}}} & $\boldsymbol{\omega}$ & $\in \Omega$ & An observer.\\
		& $d$ & $(\cdot,\cdot)$  & A distance function. \\
		& $\mathcal N$ & $\subset \Omega$ & Set of nearest observers. \\
		& $\mathcal N_a$ & $\subset \Omega$ & Set of nearest active observers. \\
		& $n$ & $\in [N_{\text{bins}}]$ & The frequency index. \\
		& \multicolumn{2}{c}{$\in_R$ } & Uniformly random sampling from a given set.\\
		& \multicolumn{2}{c}{$\mathbb R, \mathbb C$ } & Sets of real and complex numbers, respectively.\\
		& \multicolumn{2}{c}{$\Re(\cdot),\Im(\cdot)$ } &  Real and imaginary part, respectively.\\[1mm]
	\end{tabular}
	\rule{\columnwidth}{\heavyrulewidth}
\end{table}

We describe the construction of our proposal. 
Main symbols and notation are shown in Table~\ref{tab:notation}. 
We denote by $[ N_{\text{bins}} ]$ with $N_{\text{bins}} \in \mathbb N $ the set $\{0,\ldots,N_{\text{bins}}-1\}$ and by $d: \mathbb R ^{D} \times \mathbb R^{D} \rightarrow \mathbb R^{+} $ a distance function (e.g., Euclidean). 
Our method enables the model to absorb temporal patterns in processed data streams. To describe this, we consider data streams satisfying the following definition.

\begin{definition}
	\label{temp_definition}
	For a given data stream, let $\gamma(\boldsymbol v, t) \in \mathbb R_0^+$ denote the expected rate of arriving data points at location $\boldsymbol v \in \mathbb R^D$ and time $t \in \mathbb R$. Therefore, $\gamma(\boldsymbol v, t)\Delta \boldsymbol v \Delta t$ stands for the expected number of data points seen in a volume $\Delta \boldsymbol v$ and time interval $\Delta t$. We say that the stream exhibits $T_0$-periodic patterns with $T_0 \in \mathbb R^+$ if $\gamma(\boldsymbol v,t)$ is $T_0$-periodic, i.e., $\gamma(\boldsymbol v,t)=\gamma(\boldsymbol v,t+T_0) $ for all $ \boldsymbol v \in \mathbb R^D, t \in \mathbb R$.
\end{definition}

Definition~\ref{temp_definition} is based on the \emph{expected} rate of arriving data points. This means that, to reason about periodic behaviors, the random stream is modeled as generated by an underlying deterministic process.
In particular, a stationary stream exhibits $T_0$-periodic patterns for any $T_0$. 
Note that Definition \ref{temp_definition} does not include concept drift, which is tackled by SDOoop with a exponential sliding window.

To capture temporal patterns, we allow the observers' observations to be $T_0$-periodic. We represent and store the associated temporal functions in terms of their FT coefficients $P_{\boldsymbol{\omega},n} \in \mathbb C$ with $\boldsymbol{\omega} \in \Omega, n \in [N_{\text{bins}}]$.
To extract observers relevant for the current point in time from the model, we first define the $q_{\text{id}}$-percentile $P_{\text{thr}} \in \mathbb R^+$ of the observers' average observations $P_{\boldsymbol \omega,0}$, i.e.,
\begin{equation}
P_{\text{thr}} = \max \Big\{ \rho \in \mathbb R^+ \, \Big\vert \, \big\vert\{ \boldsymbol{\omega} \in \Omega \, \vert \,  P_{\boldsymbol{\omega},0} < \rho \}\big\vert \le q_\text{id} \vert\Omega\vert \Big\}.
\end{equation}
Similar to previous work, $P_{\text{thr}}$ allows us to require active observers to have a minimum number of observations in relation to the total time-averaged observation count.
Hence, we construct a view yielding the currently \emph{active} observers 
\begin{equation} \label{eq:omega_a}
\Omega_a = \Bigg\{ \boldsymbol{\omega} \in \Omega \, \Big\vert \, \Re \left\{ \sum{}_{n \in [N_{\text{bins}}]} P_{\boldsymbol{\omega},n} \right\} \ge P_{\text{thr}} \Bigg\} 
\end{equation}
in terms of a lower-bound for the inverse FT, where $\Re \left\{ \sum{}_{n \in [N_{\text{bins}}]} P_{\boldsymbol{\omega},n} \right\}$ evaluates the temporal shape of the observers' observations at the current time.

To narrow the scope to the most relevant information, we form sets from both $\Omega$ and $\Omega_a$ that only contain the $x$ nearest points. Hence, for a point  $\boldsymbol v \in \mathbb R^D$ we specify the set of nearest observers $\mathcal N(\boldsymbol v)\subset \Omega$ with $ \vert\mathcal N \vert  = \min(x,\vert\Omega\vert)$ and the set of nearest \emph{active} observers $\mathcal N_a(\boldsymbol v) \subset \Omega_a$ with $ \vert \mathcal N_a \vert = \min(x,\vert\Omega_a\vert)$, i.e.
\begin{align}
d( \tilde {\boldsymbol{\omega}}, \boldsymbol v ) &\le d( \boldsymbol{\omega}, \boldsymbol v ) \, \forall \, \tilde {\boldsymbol{\omega}} \in \mathcal N(\boldsymbol v), \boldsymbol{\omega} \in \Omega \setminus \mathcal N(\boldsymbol v) \, \text{ and} \\
d( \tilde {\boldsymbol{\omega}}, \boldsymbol v ) &\le d( \boldsymbol{\omega}, \boldsymbol v ) \, \forall \, \tilde {\boldsymbol{\omega}} \in \mathcal N_a(\boldsymbol v), \boldsymbol{\omega} \in \Omega_a \setminus \mathcal N_a(\boldsymbol v) .
\end{align}
Algorithm \ref{alg:sdostream} depicts the core process, discussed as follows.

\begin{algorithm}[thb]
	\caption{Processing a data point $(\boldsymbol v_i,t_i)$.}
	\begin{algorithmic}[1]
		\State
		Find $x$ nearest observer sets $\mathcal N(\boldsymbol v_i)$ and $\mathcal N_a(\boldsymbol v_i)$
		\State \textbf{report} median$_{\boldsymbol{\omega} \in \mathcal N_a} d( \boldsymbol{\omega}, \boldsymbol v_i $) as outlier score
		\State Set $H_{\boldsymbol{\omega}} \leftarrow H_{\boldsymbol{\omega}} \exp(-\frac{t_i-t_{i-1}}{T}) + 1 \quad \forall \, \boldsymbol{\omega} \in \Omega$
		\State Set $P_{\boldsymbol{\omega},n} \hspace*{-1mm}  \leftarrow \hspace*{-1mm} P_{\boldsymbol{\omega},n} \hspace*{-1mm} \left[\exp(\text{-}\frac{1}{T}\text{+} \frac{jn2\pi}{T_0})\right]^{t_i\text{-}t_{i\text{-}1}} \hspace*{-1mm} \forall \boldsymbol{\omega}  \in  \Omega, n \in [N_{\text{bins}}]$
		\State Set $P_{\boldsymbol{\omega},n} \leftarrow P_{\boldsymbol{\omega},n} + 1  \forall \, \boldsymbol{\omega} \in \mathcal{N}, n \in [N_{\text{bins}}]$
		\If{$\vert \Omega \vert  =  0$ \textbf{or} $r \le \frac{k^2}{Tx} \frac{\sum_{\boldsymbol{\omega} \in \mathcal N} P_{\boldsymbol{\omega},0}}{\sum_{\boldsymbol{\omega} \in \Omega} P_{\boldsymbol{\omega},0}} \frac{t_i\text{-}t_{i_{\text{LAO}}}}{i\text{-}i_{\text{LAO}}}$ with $r  \in_R [0,1]$}
		\State Remove arg\,min$_{\boldsymbol{\omega} \in \Omega} \, \frac{P_{\boldsymbol{\omega},0}}{H_{\boldsymbol{\omega}}}$ from $\Omega$ \textbf{if} $\vert \Omega \vert = k$
		\State Add $\boldsymbol v_i$ to $\Omega$
		\State {Set $i_{\text{LAO}} \leftarrow i, \, H_{\boldsymbol v_i} \leftarrow 1$ and $P_{\boldsymbol v_i,n} \leftarrow 1 \forall n \in [N_{\text{bins}}]$}
		\EndIf
	\end{algorithmic}
	\label{alg:sdostream}
\end{algorithm}

\subsection{Algorithm Construction}
Algorithm \ref{alg:sdostream} can be divided into three parts: establishing active observers $\Omega_a$ (line 1), scoring outlierness (line 2), and updating the model (lines 3-10). 
The core concept, which allows to capture periodical patterns, is based on Lemma \ref{ft_lemma}.

\begin{lemma}
	\label{ft_lemma} 
	For an observer $\boldsymbol{\omega} \in \Omega$, let $g(t)\in \mathbb R^+$ denote the expected rate of arriving data points, for which $\boldsymbol{\omega}$ is contained in $\mathcal N$ at time $t$. If $g(t)$ is a $T_0$-periodic function and $T\gg T_0$, observations $P_{\boldsymbol{\omega},n}$ approximate a Fourier transform $E\{P_{\boldsymbol{\omega},n}\} \approx \int _{-T_0}^{0} g(\tau-t) \exp(-j2 \pi n\tau/T_0) d\tau$ up to a constant factor.
\end{lemma}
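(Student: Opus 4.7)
The plan is to analyze the updates in lines 4 and 5 of Algorithm 1 as a continuous-time linear filter driven by the point process of arrivals, take expectations to obtain a deterministic ODE, solve it as a convolution, and then exploit $T\gg T_0$ to collapse the tail of the convolution into a geometric sum over past periods, leaving a single windowed Fourier integral.

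First I would interpret the multiplicative update in line 4 as the exact flow of the linear ODE $\dot P_{\boldsymbol{\omega},n}(t) = (-1/T + jn2\pi/T_0)\, P_{\boldsymbol{\omega},n}(t)$ between arrivals, since raising $\exp(-1/T + jn2\pi/T_0)$ to the power $t_i-t_{i-1}$ is precisely this flow over a time increment $\Delta t$. The additive update in line 5 then contributes a unit impulse at each arrival for which $\boldsymbol{\omega}\in\mathcal N$. Under the hypothesis of the lemma, these arrivals form an inhomogeneous point process of intensity $g(t)$.

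Taking expectations, and using Campbell's theorem to replace the random impulse train by its deterministic mean rate $g(t)$, one obtains $\dot{\bar P}_n(t) = (-1/T + jn2\pi/T_0)\, \bar P_n(t) + g(t)$ with $\bar P_n = E\{P_{\boldsymbol{\omega},n}\}$. Imposing the natural initial condition $\bar P_n(-\infty)=0$ and solving by variation of constants yields the convolution $\bar P_n(t) = \int_{-\infty}^{t} e^{(-1/T + jn2\pi/T_0)(t-s)}\, g(s)\, ds$. After the substitution $\tau = s - t$, this becomes an integral over $\tau \in (-\infty, 0]$ of $e^{\tau/T}\, e^{-jn2\pi\tau/T_0}\, g(t+\tau)$.

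The final step partitions $(-\infty,0]$ into windows $[-(k+1)T_0,\,-kT_0]$ for $k=0,1,\dots$. On each window the complex exponential $e^{-jn2\pi\tau/T_0}$ is unchanged modulo its period (for integer $n$) and $g(t+\tau)$ is unchanged by $T_0$-periodicity, while the slow decay $e^{\tau/T}$ is approximately the constant $e^{-kT_0/T}$ because $T\gg T_0$. Summing the resulting geometric series $\sum_{k\ge 0} e^{-kT_0/T} = 1/(1-e^{-T_0/T})$ gives a prefactor independent of $t$ and of $g$, multiplied by a single windowed Fourier integral of length $T_0$ — matching the claimed expression up to that constant factor.

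The main obstacle I expect is the rigorous passage from the stochastic jump dynamics to the deterministic ODE: one must justify replacing the random impulse train by its intensity, which is a statement about the mean of a shot-noise process and can be handled either by Dynkin's formula or directly by the Campbell--Mecke formula. Once that is in place, the $T\gg T_0$ step is a routine scale-separation argument, and matching the windowed-Fourier form (including any sign convention on $\tau$) is absorbed into the quantified ``constant factor''.
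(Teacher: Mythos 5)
Your proposal is correct and takes essentially the same route as the paper's proof: the paper reaches your convolution formula $E\{P_{\boldsymbol{\omega},n}\}=\int_{-\infty}^{t}g(\tau)\big(\exp(-T^{-1}+jn2\pi/T_0)\big)^{t-\tau}\,d\tau$ by telescoping the per-sample multiplicative updates of line 4 and summing the unit contributions of line 5 (your Campbell/ODE step done informally), and then, exactly as you do, splits the integral into $T_0$-length windows, uses $T_0$-periodicity and $\exp(jn2\pi)=1$ to factor out the geometric series, and invokes $T\gg T_0$ to neglect the residual decay within a single window. The remaining differences are cosmetic (whether the within-window decay is absorbed per window or only in the final window, and the sign convention in the argument of $g$, which already appears in the paper's own statement).
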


We prove the lemma in Appendix~\ref{sec:proof1}.
Lemma \ref{ft_lemma} shows that temporal information about how frequently observers are used can be extracted from $P_{\boldsymbol{\omega} , n}$ in terms of an inverse FT. 
To obtain the current set of active observers $\Omega_a$, 
it suffices by selecting observers that have been used most often in the past.
Here, 
observer activity is mainly evaluated at time $t-T_0$, which is reasonable due to $T_0$-periodicity. However, due to inherent interpolation, also the very recent activity of observers is considered, which is particularly relevant for setting new observers.
In Theorem \ref{omega_a_theorem}, we show that our method applies this approach for constructing $\Omega_a$.

\begin{theorem}
	\label{omega_a_theorem}
	At time $t$, for data streams with $T_0$-periodic patterns, the active observers set $\Omega_a$, as used by Algorithm \ref{alg:sdostream}, contains observers with highest $g(t).$
\end{theorem}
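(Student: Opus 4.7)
The plan is to translate the defining inequality $\Re\{\sum_{n\in[N_{\text{bins}}]} P_{\boldsymbol{\omega},n}\} \ge P_{\text{thr}}$ for $\Omega_a$ into a statement about $g(t)$ by leveraging Lemma~\ref{ft_lemma}. First I would pass to expectations and substitute the Fourier integral from the lemma, then interchange the finite sum over $n$ with the integral. The resulting kernel
\[
K(\tau)=\sum_{n=0}^{N_{\text{bins}}-1}\exp(-j2\pi n\tau/T_0)
\]
is a geometric sum whose real part is strongly peaked at $\tau=0$ and oscillates elsewhere; for $g$ slowly varying on the timescale $T_0/N_{\text{bins}}$, the integral therefore collapses, to leading order, to a positive multiple of $g$ evaluated at the current time. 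This yields the key identity $\Re\{\sum_n E\{P_{\boldsymbol{\omega},n}\}\} \propto g(t)$, with the same proportionality constant across all observers.

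With that identity in place, the theorem becomes a ranking statement. The defining inequality of $\Omega_a$ is of the form ``observer's score $\ge$ common threshold,'' and the previous step has shown that this score is a strictly monotone function of $g(t)$. Hence $\Omega_a$ picks precisely the observers whose $g(t)$ exceeds the induced cutoff, i.e., those with the highest instantaneous rate. The concrete numerical value of $P_{\text{thr}}$, which is fixed from the time-averaged $P_{\boldsymbol{\omega},0}$ via a $q_{\text{id}}$-percentile, does not enter the qualitative argument: all that matters is that one and the same threshold is applied to every observer.

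The main obstacle I anticipate is justifying the Dirichlet-kernel collapse with enough care that it does not reshuffle the observer ordering. Concretely, the finite truncation at $N_{\text{bins}}$ bins means $K(\tau)$ is only an approximate delta, so one needs either a band-limit assumption on $g$ or an asymptotic regime in which $N_{\text{bins}}$ is large compared to the bandwidth of $g$ in order to control the smoothing error. A secondary technicality is the boundary contribution at $\tau=-T_0$, but the $T_0$-periodicity from Definition~\ref{temp_definition} forces $g(t-T_0)=g(t)$, so the boundary reinforces rather than distorts the peak at the origin. Finally, the passage from expectations to realised values inherits the $T\gg T_0$ regime already invoked by Lemma~\ref{ft_lemma}, so the theorem is to be read as an approximate selection statement valid once the observer statistics have stabilised.
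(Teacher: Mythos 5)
Your proposal follows essentially the same route as the paper's proof: invoke Lemma~\ref{ft_lemma} to read $\Re\{\sum_{n\in[N_{\text{bins}}]}P_{\boldsymbol{\omega},n}\}$ as the inverse FT of $g$ evaluated at the current time (with the same constant for every observer), and then note that applying the common threshold $P_{\text{thr}}$ selects exactly the observers with the largest such values, i.e., highest $g(t)$. Your explicit Dirichlet-kernel collapse and the band-limit/smoothness caveat on $g$ merely spell out, more carefully than the paper's one-line ``inverse FT at time $t=0$'' argument, why the truncation to $N_{\text{bins}}$ bins preserves the ordering, so the approach is the same in substance.
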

\begin{proof}
	Equation~\ref{eq:omega_a} constructs the set $\Omega_a$ by selecting observers from $\Omega$, for which $\Re \{\sum_{n\in[N_{\text{bins}}]}P_{\boldsymbol{\omega},n}\}$ is highest. If $P_{\boldsymbol{\omega},n}$ yields the FT of $g(t)$ according to Lemma \ref{ft_lemma}, the theorem follows immediately, since $\Re\{\sum_{n\in[N_{\text{bins}}]}P_{\boldsymbol{\omega},n}\}$ performs the inverse FT at time $t=0$ relative to the current time.
\end{proof}

Theorem \ref{omega_a_theorem} allows us to use $\Omega_a$ for assessing outlierness of arriving data points by leveraging nearest-observer distances. Hence, in line 1, $\mathcal N$ and $\mathcal N_a$ are constructed. Based on \cite{Iglesias2019}, we compute an outlier score with the median of distances to the $x$ closest observers. 
The final part of Algorithm \ref{alg:sdostream} handles model updating, which involves replacing the less ``active'' observer. 
Updating of $P_{\boldsymbol{\omega}, n}$ in line 4 follows an exponential shape set by time $T$. We show in Theorem \ref{sample_count}  that replacing observers proceeds with the same pace, which is necessary as observers otherwise would not be able to build meaningful  $P_{\boldsymbol{\omega}, n}$ values.
\begin{theorem}
	\label{sample_count}
	For data streams exhibiting $T_0$-periodic patterns, Algorithm \ref{alg:sdostream} on average samples $k$ data points during a time period $T$ as new observers.
\end{theorem}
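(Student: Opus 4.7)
The plan is to compute the expected long-run rate at which line~8 of Algorithm~\ref{alg:sdostream} adds a new observer, and to show that it equals $k/T$. If $\lambda(t)$ denotes the instantaneous arrival rate with time-averaged mean $\bar\lambda$, the expected count of new observers in any interval of length $T$ is $T\bar\lambda\cdot E[p_i]$, where
\[
p_i = \frac{k^2}{Tx}\cdot\frac{\sum_{\boldsymbol{\omega}\in\mathcal N}P_{\boldsymbol{\omega},0}}{\sum_{\boldsymbol{\omega}\in\Omega}P_{\boldsymbol{\omega},0}}\cdot\frac{t_i-t_{i_{\text{LAO}}}}{i-i_{\text{LAO}}}
\]
is the acceptance probability in line~6. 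It therefore suffices to establish $E[p_i]=k/(T\bar\lambda)$.

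First I would pin down the steady-state size of $P_{\boldsymbol{\omega},0}$. Viewing lines~4 and~5 in the continuous limit as the linear ODE
\[
\dot P_{\boldsymbol{\omega},0}(t) = -\frac{P_{\boldsymbol{\omega},0}(t)}{T}+g_{\boldsymbol{\omega}}(t),
\]
with $g_{\boldsymbol{\omega}}$ the rate of Lemma~\ref{ft_lemma}, and averaging over one period $T_0$ (justified by $T\gg T_0$), one obtains $E\{P_{\boldsymbol{\omega},0}\}\approx T\bar g_{\boldsymbol{\omega}}$ up to the constant factor absorbed in Lemma~\ref{ft_lemma}. Because every arrival contributes to exactly $x$ neighbour sets, $\sum_{\boldsymbol{\omega}\in\Omega}g_{\boldsymbol{\omega}}(t)=x\lambda(t)$, so summing yields $\sum_{\boldsymbol{\omega}\in\Omega}P_{\boldsymbol{\omega},0}\approx Tx\bar\lambda$.

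Next I would handle the two remaining random factors in $p_i$. For a fresh data point $\boldsymbol v$ drawn from the stream, $\Pr[\boldsymbol{\omega}\in\mathcal N(\boldsymbol v)]=\bar g_{\boldsymbol{\omega}}/\bar\lambda$ by definition of $g_{\boldsymbol{\omega}}$, so
\[
E\Big[\sum_{\boldsymbol{\omega}\in\mathcal N}P_{\boldsymbol{\omega},0}\Big]\approx\frac{T}{\bar\lambda}\sum_{\boldsymbol{\omega}\in\Omega}\bar g_{\boldsymbol{\omega}}^{\,2}.
\]
Assuming the eviction rule in line~7 drives the system to an equidistributed state with $\bar g_{\boldsymbol{\omega}}\approx x\bar\lambda/k$, this expectation reduces to $Tx^2\bar\lambda/k$, giving an average ratio of $x/k$ for the middle factor. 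The third factor $(t_i-t_{i_{\text{LAO}}})/(i-i_{\text{LAO}})$ is the empirical mean inter-arrival time since the last added observer, which by the law of large numbers concentrates at $1/\bar\lambda$. Multiplying the three pieces yields $E[p_i]\approx (k^2/(Tx))(x/k)(1/\bar\lambda)=k/(T\bar\lambda)$, and the theorem follows.

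The principal obstacle is justifying the equidistribution $\bar g_{\boldsymbol{\omega}}\approx x\bar\lambda/k$. By Jensen's inequality $\sum_{\boldsymbol{\omega}}\bar g_{\boldsymbol{\omega}}^{\,2}\ge x^2\bar\lambda^2/k$, with equality precisely in the uniform case, so any non-uniform loading strictly inflates $E[p_i]$; the $k^2/(Tx)$ prefactor has evidently been chosen to cancel exactly the equidistributed ratio. I would therefore either state the claim in the mean-field sense, or argue that the eviction rule (which removes the observer with smallest $P_{\boldsymbol{\omega},0}/H_{\boldsymbol{\omega}}$) is precisely the operator that contracts the variance of $\{\bar g_{\boldsymbol{\omega}}\}$ to zero at its fixed point, so that equidistribution is self-consistent in steady state. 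The auxiliary sequence $H_{\boldsymbol{\omega}}$ is what makes the comparison fair across observers of different ages, and I would invoke this to turn the heuristic equidistribution claim into a genuine fixed-point statement.
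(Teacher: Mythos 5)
Your proposal follows essentially the same route as the paper's proof: neglect the truncation of the acceptance probability at~1, factor it into the observation-count ratio (which averages to $x/k$ under a representativity assumption on the observers) and the empirical inter-arrival term (which averages to the mean inter-arrival time $1/\bar\lambda$), and multiply by the $T\bar\lambda$ arrivals in a window of length $T$. The only difference is that you make explicit what the paper merely asserts --- deriving $\sum_{\boldsymbol{\omega}}P_{\boldsymbol{\omega},0}\approx Tx\bar\lambda$ from the EWMA dynamics and exposing, via the size-biased $\sum_{\boldsymbol{\omega}}\bar g_{\boldsymbol{\omega}}^{\,2}$ term and Jensen's inequality, that the paper's ``observer density agrees with point density'' claim is exactly an equidistribution assumption --- which is a sharper accounting of the same heuristic step rather than a different argument.
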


We prove the lemma in Appendix~\ref{sec:proof2}.
Note that the factor $\frac{\sum_{\boldsymbol{\omega} \in \mathcal N } P_{\boldsymbol{\omega},0} /x}{\sum_{\boldsymbol{\omega} \in \Omega } P_{\boldsymbol{\omega},0} /k}$ occurring in line 6 of Algorithm~\ref{alg:sdostream} might be omitted without invalidating Theorem \ref{sample_count}. 
However, we include it to promote representativity of the observers set. 
Hence, underrepresented observers in a  neighborhood cause the observation count in this neighborhood to increase, leading to a higher sampling probability, while overrepresented observers lead to a lower sampling probability. 
Moreover, this factor is stronger during the transient starting phase, ensuring that the model soon reaches its full size, but at the same time avoiding that it is fulfilled with the first points, which in many cases would build unrepresentative models.

During time $T$ the model is replaced one time on average according to Theorem \ref{sample_count}. Since we use a fixed-size model, an observer has to be removed when adding a new one, picking the removal candidate based on its $P_{\boldsymbol{\omega}, n}$. 
To avoid new observers constantly replaced due to the stronger inertia of old observers, we use an age-normalized observation count $\frac{P_{\boldsymbol {\omega},0}}{H_{\boldsymbol {\omega}}}$ for selecting the observer to remove in line 7. By updating $H_{\boldsymbol{\omega}}$ as depicted in line 3,  $H_{\boldsymbol{\omega}}$ denotes the maximum $P_{\boldsymbol{\omega},0}$ that an observer $\boldsymbol{\omega}$ might have reached over time. Thus, $\frac{P_{\boldsymbol {\omega},0}}{H_{\boldsymbol {\omega}}} \in [0,1]$, where 1 is only scored if $\boldsymbol {\omega}$ has always been in $\mathcal N$ since it was assimilated in the model.

\subsection{Interpreting the Learned Model}

Direct analysis of patterns in streams from a manual perspective is inherently complicated. Experts quickly run into difficulties regarding how to observe the data, what reference points to take, for how long, how much data to use, how to do this incrementally, etc. 
SDOoop solves all these issues in a natural and elegant way.
At any point in time, the observers set $\Omega$ allows prompt access to highly representative data points that additionally retain temporal information. The temporal shape $g_{\boldsymbol{\omega}}(t)$ of observed data points in a neighborhood of a given observer $\boldsymbol \omega$ can be efficiently recovered in terms of an inverse FT,
\begin{equation} \label{eq:ift_experiments}
g_{\boldsymbol{\omega}}(t) = \Re \left\{ \sum{}_{n \in [N_{\text{bins}}]} P_{\boldsymbol{\omega},n} \exp(jtn2\pi /T_0) \right\} .
\end{equation}

SDOoop is designed to be embedded in the data processing pipeline of stand-alone systems. In other words, its primary purpose is feeding subsequent analysis phases, e.g., visualization or clustering techniques to extract further knowledge. Nevertheless, $\Omega$ is commonly small enough for manual inspection, 
meaning that we can explore the set of observers along different time spans and study their periodicities, but also isolate a given instant in time to focus only on its stream characteristics.  
By analyzing observers in $\Omega_a$, the data analyst obtains an immediate depiction of the data model to easily interpret both the ground of the outlierness scoring and how data are (or are expected to be) as a whole.

\subsection{Costs and Parameters}

The computational cost lies primarily in the comparison of incoming data points with the learned model.
Assuming that distance computation is $\mathcal{O}(D)$ with the number of dimensions $D$, 
building $\Omega_a$ (equation \ref{eq:omega_a}) implies 
time complexity $\mathcal{O}(kN_{\text{bins}})+\mathcal{O}(kD)$. 
Holding the model in memory requires storing the observers $\boldsymbol\omega$ and storing their observations $P_{\boldsymbol \omega,n}$, similarly resulting $\mathcal{O}(kN_{\text{bins}})+\mathcal{O}(kD)$ as space complexity.
Therefore, per-point space and time complexity linearly depend on model size $k$, which is a pre-fixed parameter.  
This makes SDOoop suitable for big data with highly demanding processing.

\begin{figure*}[b]
	\centering
	\includegraphics[width=0.9\textwidth]{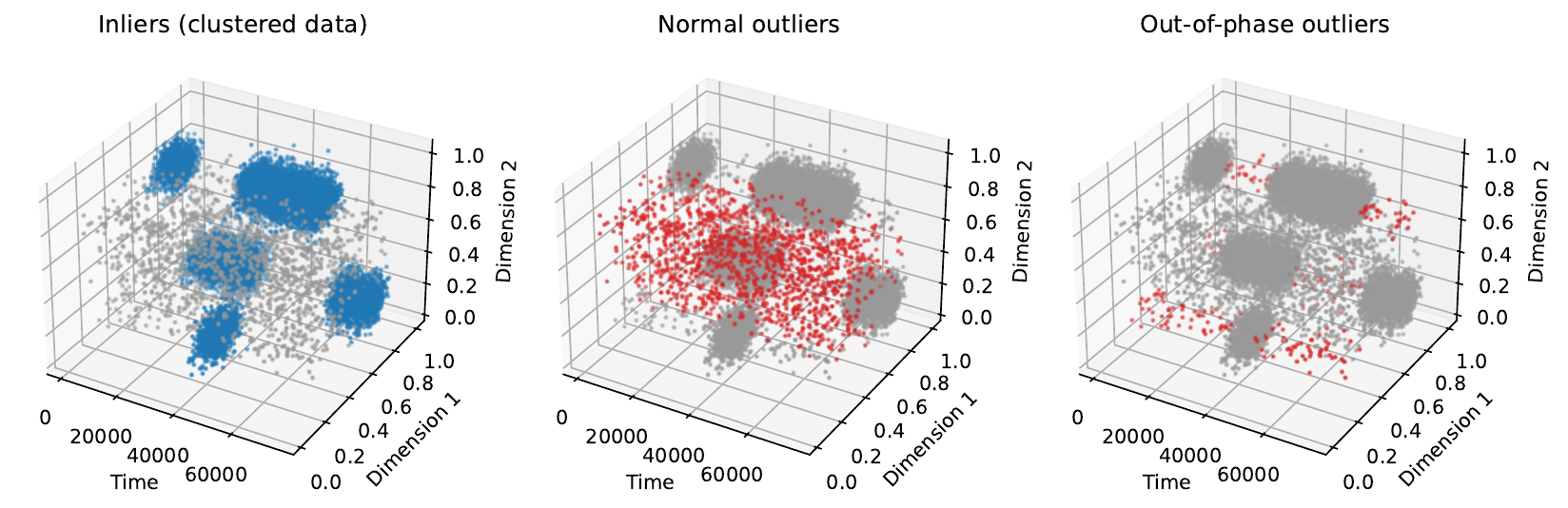}
	\caption{Normal outliers and contextual outliers (aka out-of-phase outliers) in synthetic data for a fraction of contextual outliers of 0.5\%.}
	\label{fig:pocdata}
\end{figure*}

Temporal behavior is captured by 
$T$, $T_0$ and the number of frequency bins $N_{\text{bins}}$. 
$T$ is the time constant of the exponential windowing mechanism.  
It governs memory length, therefore equivalent to the window length of sliding window algorithms.
$T_0$ denotes the period of the FT base frequency. Periodicities can be captured best if $T_0$ is an integer multiple of expected periodicities.
For instance, in many real-world applications, it might be reasonable a $T_0$-value of one week, so that weekly and diurnal patterns can be detected. 
Furthermore, to ensure that the EWMA approximates a Fourier integral, $T_0$ should be reasonably smaller than $T$.
$N_{\text{bins}}$ determines the maximum frequency that can be captured by the model,
hence also fixing the temporal resolution of the learned temporal shapes.

$T$, $T_0$ and $N_{\text{bins}}$ are intuitive parameters and can be easily adjusted based on domain knowledge. 
$q_{\text{id}}$, $k$ and $x$ are discussed extensively in \cite{Iglesias2018} and \cite{Hartl2019}.
Here suffice it to mention that further experimentation confirms $q_{\text{id}}$, $k$ and $x$ robustness, meaning that performances are stable for a wide range of values and that most applications work properly with default configurations.
The setting of $k$ depends on the expected variability and degree of representation, but several hundred observers is sufficient in most cases. 
$x$ inherits from nearest-neighbor algorithms, with similar tuning strategies \cite{Hall2008}. 
In our experiments, values between $k \in [100,1000]$, $x \in [3,9]$ and $q_{\text{id}} \in [0.1,0.3]$ have shown excellent results.

\section{Experimental Evaluation} \label{sec:experiments}
In this section, we discuss the experimental evaluation of SDOoop.
Based on a proof of concept, we first demonstrate its capability to detect contextual outliers. We then proceed by benchmarking OD performances on public datasets. We finally show the discovery and modeling of temporal patterns in real-life cases.

\subsection{Contextual Anomalies: Proof of Concept}
For this proof of concept, we use MDCGen~\cite{Iglesias2019} to generate a synthetic data stream of five clusters that vanish and reappear at different times and periods. We add both spatial and contextual outliers into the stream. 
Fig.~\ref{fig:pocdata} shows an excerpt of the generated data stream with 0.5\% of contextual outliers. Hence, while normal outliers are distributed across the entire feature space, outliers occurring out of phase fall in the same spatial location as clustered data, but their time of appearance does not meet the temporal shape of clustered data.

\begin{figure}[t]
	\centering
	\includegraphics[width=0.9\columnwidth]{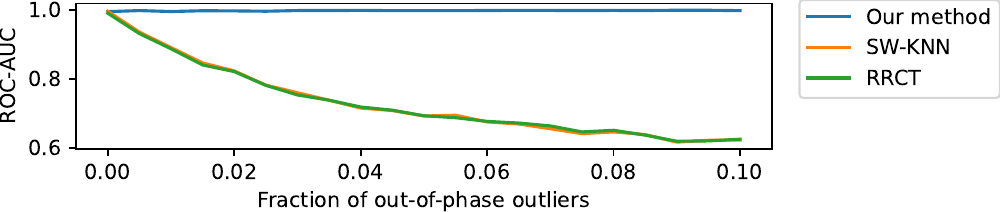}
	\caption{OD performance vs contextual outlier rate in the proof of concept.}
	\label{fig:poc}
\end{figure}

In Fig.~\ref{fig:poc}, we plot the area under the ROC curve (AUC) for different ratios of contextual outliers to data points in one active cluster.
We compare SDOoop with two consolidated OD methods for streaming data: 
SW-$k$NN (the sliding-window implementation of $k$NN for OD~\cite{Ramaswamy2000}) and RRCT~\cite{Guha2016}. 
All algorithms have been properly tuned to capture at least one full period. The more outliers occur out of phase, the more the performance of traditional algorithms plummets, 
whereas our method retains the highest AUC at all times.
This clearly indicates that 
SDOoop is the only algorithm capable of detecting contextual outliers.

\subsection{OD Performance with Evaluation Datasets}
To compare our method with state-of-the-art stream OD algorithms, we selected popular OD datasets of sufficient length 
and with timestamped data points.

\textbf{Datasets and metrics.} 
The KDD Cup'99 dataset \cite{Kddcup99} aims at detecting network intrusions based on a number of network and host features and, similar to previous work~\cite{Campos2016},
we considered User to Root (U2R) attacks as outliers over normal traffic, resulting in 976,414 data points with an outlier proportion of 0.4\%.
Additionally, we selected the recent SWAN-SF \cite{Angryk2020} dataset, which collects data about solar flares, and used preprocessing scripts provided by Ahmadzadeh and Aydin~\cite{Ahmadzadeh2020}. For SWAN-SF, we assigned a normal label to the majority class and an outlier label to the remaining classes, resulting in 331,185 data points with an outlier portion of 17.2\%. In both experiments, we randomly sampled 50\% of the data stream for randomized hyperparameter search and the other half for evaluation. For an overview of the ranges of hyperparameters, we refer to the code repository of this paper. Metrics for evaluation are Adjusted Average Precision (AAP), 
Adjusted Precision at $n$ (AP@n), 
and AUC~\cite{Campos2016}.

\textbf{Algorithms and experimental setups.} 
We used the dSalmon framework~\cite{Hartl2024}, which provides efficient versions of several stream OD algorithms. 
Since ensembles commonly exhibit superior accuracy, we used an ensemble of nine for SDOoop, yet noticing almost no difference compared to a single SDOoop detector. 

\textbf{OD performances.} 
Experiment results in Table~\ref{tab:performance_results} show how our method matches and even outperforms state-of-the-art algorithms for streaming OD. The strongest competitor is RS-Hash \cite{Sathe2016}. 
In the SWAN-SF case, SDOoop ranks among the best performers, while, in the KDD Cup'99 dataset, it clearly stands out, particularly in AAP and AP@n. 
The higher AP@n also indicates that our method finds several true outliers that pass unperceived for the competitors.

\begin{table}[t]
	\centering
	\caption{Performance comparison with different OD algorithms}
	\label{tab:performance_results}
	\vspace*{-1mm}\rule{\columnwidth}{\heavyrulewidth}
	\begin{tabular}{l r r r r r r} \\[-2mm]
		& \multicolumn{3}{c}{\textbf{SWAN-SF} \cite{Angryk2020}} & \multicolumn{3}{c}{\textbf{KDD Cup'99} \cite{Kddcup99}}\\
		& AAP & AP@n & AUC & \hspace{2mm}AAP & AP@n & AUC\\
		SW-$k$NN & 0.69 & \textbf{0.56} & \textbf{0.91} & 0.07 & 0.15 & 0.72\\
		SW-LOF & 0.15 & 0.12 & 0.58 & -0.00 & -0.00 & 0.67\\
		LODA~\cite{Pevny2016} & 0.72 & 0.54 & \textbf{0.91} & 0.10 & 0.13 & 0.92\\
		RS-Hash~\cite{Sathe2016} & \textbf{0.73} & 0.55 & \textbf{0.91} & 0.13 & 0.15 & 0.95\\
		RRCT~\cite{Guha2016} & 0.23 & 0.19 & 0.69 & 0.07 & 0.05 & 0.85\\
		SDOoop & \textbf{0.73} & 0.55 & \textbf{0.91} & \textbf{0.33} & \textbf{0.54} & \textbf{0.97}
		\\[1mm]
	\end{tabular}
	\rule{\columnwidth}{\heavyrulewidth}
\end{table}

\textbf{Disclosing insights about the data.} 
Obtained results seem consistent with data contexts. Considering how
outliers have been defined in the SWAN-SF dataset, we do not expect
that outliers break possible temporal periodicities in samples from solar
flares. 
On the other hand, the KDD Cup’99 dataset describes events in a
computer network, which are expected to exhibit strong
temporal patterns due to human activity. Patterns may be broken by attack traffic, 
leading to contextual outliers, behaviors that can be spotted by our method. 
Here, the superior detection of
SDOoop not only indicates that the data
show temporal patterns, but also that some U2R
attacks are indeed contextual outliers.

\subsection{Temporal Patterns in Machine-to-Machine Communication} \label{sec:experiments_m2m}

\textbf{Application context.} 
We study network traffic captured in a critical infrastructure,
in particular, of an energy supply company that connects charging stations for electric vehicles. The network communication satisfies management, accounting and maintenance aspects\footnote{While we embrace reproducible research, issues related to confidentiality, security and privacy prevent us from making these data publicly available.}.
Network communication for these purposes usually adopts the OCPP protocol. 
Due to the large portion of machine-to-machine communications, we expected to discover distinct periodic patterns.

\textbf{Preprocessing and parameters.} 
We preprocessed data with the feature vector described in \cite{Williams2006}, resulting in 13 million flows during a 1 month period. We parameterized the algorithm using $T=1$ week, 2000 frequency bins and $T_0=2000$ minutes, obtaining a minimum period of 1 minute. 
We used 400 observers. 

\textbf{Capturing periodical patterns/clusters.} 
Fig.~\ref{fig:vkw} shows on the left side examples for the frequency spectrum (magnitude) learned by observers. Hence, different clusters show diverse temporal patterns. While observer 1 shows no or just weak periodicities, observer 2 shows a clear 5 minute periodicity and observers 3 and 4 show a 10 minute periodicity. From the learned FT, temporal shapes can be constructed in terms of an inverse FT as depicted in equation~\ref{eq:ift_experiments}.
Fig.~\ref{fig:vkw} also shows the reconstructed temporal shape plotted over a 1 hour and 24 hour period. Hence, beneath the periodicities already found when inspecting the FT directly, the temporal shape for observers 3 and 4 additionally shows periodicities of a longer period of approximately 2.5 hours.

\begin{figure*}[htb!]
	\centering
\includegraphics[width=0.28\textwidth]{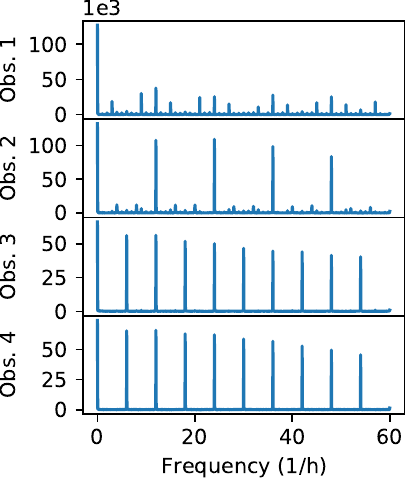} \quad
\includegraphics[width=0.28\textwidth]{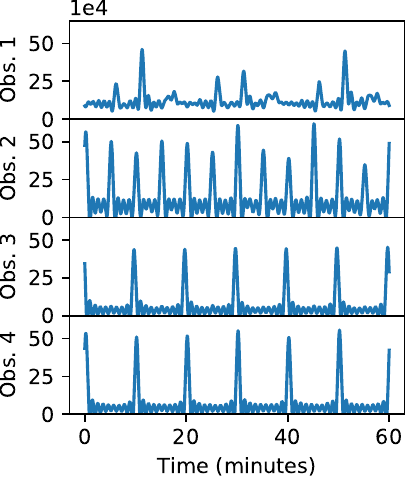} \quad
\includegraphics[width=0.28\textwidth]{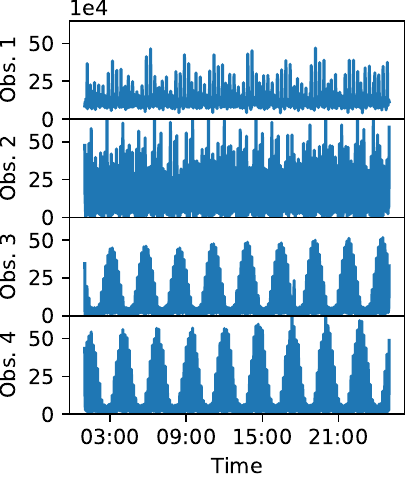}
	\caption{Learned magnitude spectrum (left), one-hour temporal plots (middle) and 24-hour temporal plots (right) for four exemplary observers when processing network data captured in an e-charging infrastructure.}
	\label{fig:vkw}
\end{figure*}

\textbf{Interpreting clusters in the application.} 
The manual examination of network flows represented by observers confirmed the soundness of discovered temporal patterns. For example, observer 3 corresponds to ICMP pings that happen regularly to ensure that network devices are alive. Observer 4 identifies DNS requests that charging stations perform to resolve the name of the OCPP server to its IP address and transmit meter readings. For observer 4, the periodicity emerges from DNS caching, so that every second request for transmitting meter readings can be performed without having to perform a DNS lookup.

Observer 1 corresponds to protocol heartbeat messages. The fact that it does not show a clear periodicity might be due to the requesting devices not being time-synchronized or by deviating device configurations. Alternatively, heartbeat messages might take place with a very high frequency, so that no periodicities can be observed at the analyzed time scale.

\textbf{Identifying outliers in the application.} 
Fig.~\ref{fig:scores} shows outlier scores of points in time order. The manual inspection of flows with highest outlierness (in the center of Fig.~\ref{fig:scores}) revealed that they are firmware update processes. Since updates took place only during two days in the monitored time span, then high outlier scores are consistent (yet they are not contextual anomalies).

\begin{figure}[b]
	\centering
	\includegraphics[width=0.9\columnwidth]{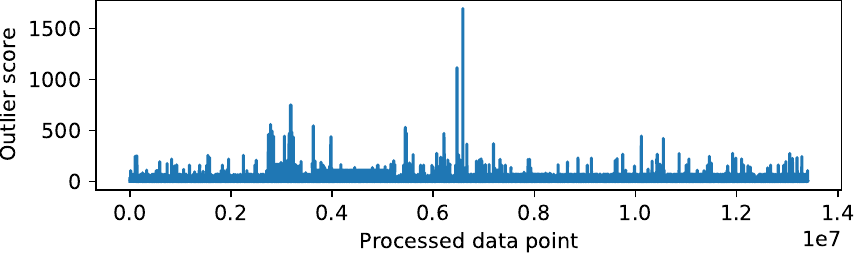}
	\caption{Outlier scores of network data from an e-charging infrastructure.}
	\label{fig:scores}
\end{figure}

\textbf{Learning stability.} 
Finally, we investigated whether our results meet the expected algorithm behavior with respect to the sampling of new observers. Fig.~\ref{fig:sampling} shows how many data points have been sampled as new observers during the first two weeks. With $T=1$ week and $k=400$ observers, $400/7\approx 57$ observers should be sampled each day according to equation~\ref{sample_count}. This theoretical conjecture shows good agreement with the empirical results.
Fig.~\ref{fig:sampling} also shows that the model is not instantly filled with observers in the first hours, but it is instead built up during the first days. Since data seen within the first couple of hours might not be representative for the remaining data, this transient behavior boosts the swift discovery of a representative model while achieving a fast model buildup  with a high sampling rate at the beginning.

\begin{figure}[b]
	\centering
	\includegraphics[width=0.9\columnwidth]{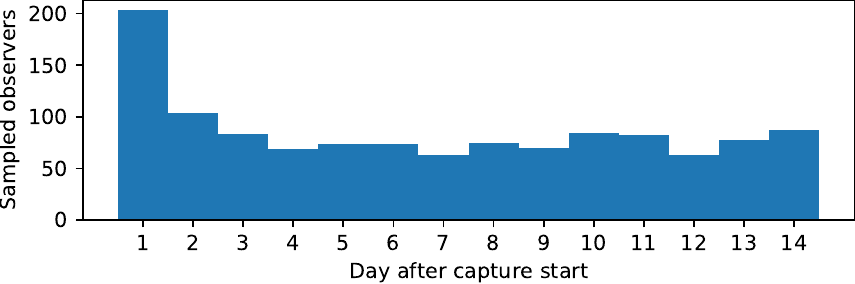}
	\caption{Sampling of arriving data points as new observers when processing network data captured in an e-charging infrastructure.}
	\label{fig:sampling}
\end{figure}

\subsection{Discovery of Temporal Patterns: Darkspace Data}

\textbf{Application context and parameterization.} 
We additionally tested our method on the publicly available CAIDA ``Patch Tuesday'' darkspace dataset \cite{CAIDA2012}. During preprocessing, we aggregated features  by source IP address using the AGM feature vector \cite{Iglesias2017}, specifically proposed for analyzing darkspace data. We applied our algorithm with $T_0=1$ week and $T=10$ weeks and 100 observers and 100 frequency bins, resulting in a minimum period length of about 100 minutes.

\textbf{Capturing and identifying periodical patterns/clusters.} 
Fig.~\ref{fig:darkspace_ft} shows the magnitude of the Fourier coefficients of the three strongest observers. Peaks in Fig.~\ref{fig:darkspace_ft} occur at the 7th and 14th frequency bins, which are diurnal and semi-diurnal periodicities. 
This coincides with previous studies of the darkspace \cite{Iglesias2017} 
that, among others, identified Conficker.C worm attacks or BitTorrent misconfigurations for diurnal patterns, and horizontal scan, vertical scan and probing activities on the UDP protocol for semi-diurnal patterns.

\begin{figure}[t]
	\centering
	\includegraphics[width=0.8\columnwidth]{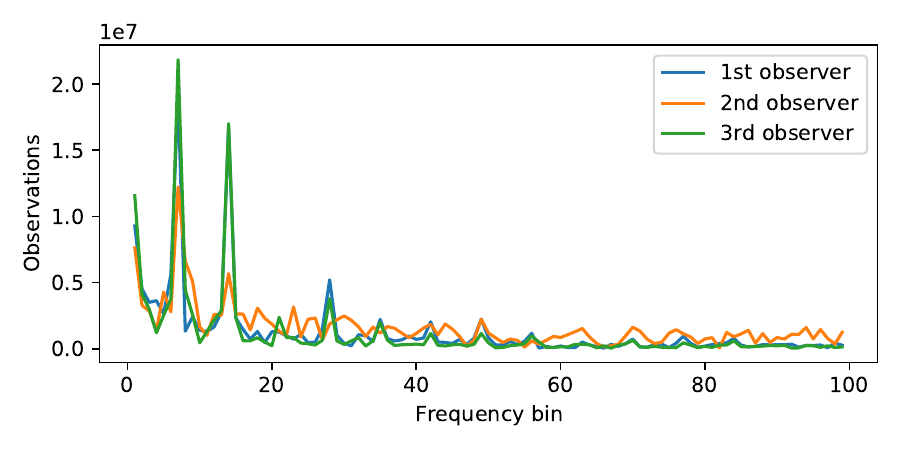}
	\caption{FT of the top three observers after processing darkspace data.}
	\label{fig:darkspace_ft}
\end{figure}

\section{Related Work} \label{sec:related_work}

The problem addressed in this paper is covered in diverse fields, yet differing in some core aspects. 
In this section, we provide an overview of related concepts and precedents to place SDOoop in the landscape of existing work.

\textbf{Time Series Analysis and Contextual Anomalies}. 
A time series is a temporal sequence of observations of specific measurement variables. Time series have been studied in multiple domains, e.g., 
finance and econometrics~\cite{Hamilton2020}
weather forecasting~\cite{Mehalakshmi2016}, 
electric load forecasting~\cite{Sapankevych2009}. 
Traditionally, time series have been analyzed with mathematical tools  
\cite{Hamilton2020}, 
however, in multivariate time series complexity increases dramatically and experts usually resort to nonlinear machine learning, e.g.,~\cite{Mehalakshmi2016, Liao2005}
Multivariate time series and streaming data are frequently considered synonyms, showing small differences open to discussion \cite{Read2020}.

Contextual anomalies have been tackled mainly in the time series analysis domain \cite{Shaukat2021}, but here experts also emphasize the low attention given to contextual anomalies in spite of its relevance for cybersecurity, healthcare sensory and fraud detection \cite{Golmohammadi2015}. 
Latest research tends to expand the focus and, besides point anomalies, face \emph{anomalous sequences} within the whole time series context, giving rise to 
fast model-based detectors \cite{Boniol2021}. However, when considered, contextual anomalies are confronted mainly from a univariate time series perspective. An exception is the recent work by Pasini et al. \cite{Pasini2022}, which copes with low-dimensional multivariate time series and proposes a global \emph{contextual variance} score by weighting \emph{feature-wise contextual variances} based on Mahalanobis distances. 
Note that such approach assumes feature independence though.

\textbf{OD in Streaming Data}
The trend in OD of recent years is to build models to process streaming data with constant memory complexity. In addition to distance-based methods like SDOstream \cite{Hartl2019}, OD in streams is grounded on tree-based methods \cite{Guha2016}, half-space chains \cite{Tan2011},
histograms \cite{Pevny2016}, randomized hashing \cite{Sathe2016}, 
or simply based on nearest neighbors in a sliding window \cite{Angiulli2007}. 
A thorough comparison of these methods can be consulted in \cite{Iglesias2023b}.

When compared to SDOoop, beyond the core approach for calculating point outlierness, the main difference is that
earlier algorithms establish a temporally evolving model (or a set of reference data points) that is deemed stationary at time scales smaller than a pre-fixed time parameter.
Hence, contextual outliers, i.e. data points that occur at an atypical time (out-of-phase), are wrongly classified as normal inliers. 
Another important property of OD methods is interpretability of returned outlier scores.
In fact, many modern techniques like forest-based methods require space transformations that inevitably sacrifice interpretability. 
In SDOoop, outlier scores can be directly interpreted as distance-to-observers, i.e., distance-to-normality. The model is small enough for manual inspection, allowing the analyst to draw conclusions about the data mass based on main model patterns. On the other hand, obtained models are also suitable for stand-alone systems or frameworks where knowledge must be integrated with decision-making modules or other types of knowledge. 

Table~\ref{fig:algo_characteristics} shows a summarized comparison of recent algorithms for evolving stream OD with regard to key properties. SW-$k$NN and SW-LOF denote sliding-window implementations of the popular $k$NN~\cite{Ramaswamy2000} and LOF~\cite{Breunig2000} algorithms.

\newcolumntype{R}[2]{%
	>{\adjustbox{angle=#1,lap=\width-(#2)}\bgroup}%
	l%
	<{\egroup}%
}
\newcommand*\rot{\multicolumn{1}{R{50}{1em}}}
\begin{table}[t]
	\caption{Characteristics of OD algorithms for evolving data streams.}
	\label{fig:algo_characteristics}
	\centering
	\footnotesize
	\begin{tabular}{l c c c c c c c c} 
		& \rot{SW-$k$NN} & \rot{SW-LOF} & \rot{LODA~\cite{Pevny2016}} & \rot{xStream~\cite{Manzoor2018}} & \rot{RS-Hash~\cite{Sathe2016}} & \rot{RRCT~\cite{Guha2016}} & \rot{SDOstream~\cite{Hartl2019}} & \rot{SDOoop}  \\\toprule
		Fixed time complex. & \raisebox{-0.9ex}{$\widetilde{}$} &$\times$&\checkmark&\checkmark&\checkmark&\raisebox{-0.9ex}{$\widetilde{}$} & $\checkmark$ & $\checkmark$ \\
		Fixed space complex. & $\times$ &$\times$&$\times$&\checkmark&\checkmark&$\times$ & \checkmark & $\checkmark$ \\
		Interpretability & \checkmark &\raisebox{-0.9ex}{$\widetilde{}$}&$\times$&$\times$&$\times$&$\times$ & \checkmark & $\checkmark$ \\
		Detect temp. patterns & $\times$ &$\times$&$\times$&$\times$&$\times$&$\times$ & $\times$ & $\checkmark$ \\
		Detect context. anom. & $\times$ &$\times$&$\times$&$\times$&$\times$&$\times$ & $\times$ & $\checkmark$ \\
		\bottomrule
	\end{tabular}
\end{table}

\textbf{Periodic Pattern Mining}. 
The detection of periodicities in 
sequences has also been investigated in the context of periodic pattern mining \cite{Fournier2017}.
Periodic pattern mining can be applied to spatiotemporal data \cite{Zhang2015} to detect periodicities in the movement of objects. 
In contrast, SDOoop is able to detect periodicities of arbitrary clusters  even if the corresponding data points are mixed up with data points from other clusters with different temporal patterns or no patterns at all.
To the best of our knowledge, this problem has not been explored before.

\section{Conclusion} \label{sec:conclusion}
Big data frequently arrives in data streams and requires online processing and analysis. We proposed SDOoop, a method for knowledge discovery in data streams that is able to capture coexisting periodicities regardless of data geometries.
Our method performs a single pass through the data and builds a fixed-size model consisting of representative point locations along with their temporal behavior in Fourier space. 
We showed equal or superior performances compared to state-of-the-art algorithms when testing OD in established evaluation datasets. Moreover, we showed that our method can be an important tool for understanding and visualizing the spatiotemporal behavior of steadily arriving real-world data,
particularly in network security and critical infrastructures communications.

\bibliographystyle{IEEEtran} 
\bibliography{main}      
\appendices

\section{Proof of Lemma \ref{ft_lemma}} \label{sec:proof1}
Let $i_o \in \mathbb N$ denote the index of a data point, for which $\boldsymbol{\omega}$ is contained in $\mathcal N$ and $i_c$ is the index of the currently processed data point, i.e., $i_o < i_c$. Then, the contribution of $i_o$ to $P_{\boldsymbol{\omega},n}$ according to line 4 of Algorithm \ref{alg:sdostream} has been multiplied by $\Pi _{i=i_o+1}^{i_c}\big(\exp(-T^{-1}+jn2 \pi /{T_0})\big)^{t_i-t_{i-1}} = \big(\exp(-T^{-1}+jn2 \pi /{T_0})\big)^{t_{i_c}-t_{i_o}}$. Summing over all points that have arrived in $\boldsymbol{\omega}$'s neighborhood, we can write 
\begin{align*}
E\{P_{\boldsymbol{\omega},n}\} = \int _{-\infty}^t g(\tau)  \Big(\exp(-T^{-1}+jn2 \pi /{T_0})\Big)^{t-\tau} d\tau.
\end{align*}
Splitting the integral into intervals of length $T_0$, we obtain
\small
\begin{align*}
& E\{P_{\boldsymbol{\omega},n}\} =  
 \sum_{l=0}^{\infty} \int _{t\text{-}T_0}^t g(\tau\text{-}lT_0)  \big(\exp(\text{-}T^{\text{-}1}+jn2 \pi /{T_0})\big)^{t\text{-}\tau\text{-}lT_0} d\tau  \\
& = \Big( \sum_{l=0}^{\infty} \exp(\text{-}T^{\text{-}1}lT_0) \Big) \int _{t\text{-}T_0}^t  g(\tau)  \big(\exp(\text{-}T^{\text{-}1}+jn2 \pi /{T_0})\big)^{t\text{-}\tau} d\tau 
\end{align*}
\normalsize
due to $T_0$-periodicity of $g(t)$ and $\exp(jn2\pi)=1$. Abbreviating the constant factor and substituting $\tau^{\prime}=\tau-t$, we obtain
\begin{align*} 
& E\{P_{\boldsymbol{\omega},n}\} = 
 c \int _{-T_0}^0 g(\tau^{\prime}-t)  (\exp(-T^{-1}+jn2 \pi /{T_0}))^{-\tau^{\prime}} d\tau^{\prime} \\ 
& \stackrel{T \gg T_0}{\approx} 
c \int _{-T_0}^0 g(\tau^{\prime}-t)  \exp(-jn2 \pi \tau^{\prime} /{T_0})) d\tau^{\prime} .  \quad \IEEEQEDhere 
\end{align*}

\section{Proof of Theorem \ref{sample_count}} \label{sec:proof2}
Taking line 6 in Algorithm \ref{alg:sdostream} as starting point, 
the probability of selecting a newly seen point as observer is
$\min\left(1,\frac{k^2}{Tx} \frac{\sum_{\boldsymbol{\omega} \in \mathcal N} P_{\boldsymbol{\omega},0}}{\sum_{\boldsymbol{\omega} \in \Omega} P_{\boldsymbol{\omega},0}} \frac{t_i-t_{i_{\text{LAO}}}}{i-i_{\text{LAO}}}\right)$. 
Since we target specifically data streams with high rates of arriving data points, we can safely assume this probability to be small.
Hence, $\Pr\left\{1 <\frac{k^2}{Tx} \frac{\sum_{\boldsymbol{\omega} \in \mathcal N} P_{\boldsymbol{\omega},0}}{\sum_{\boldsymbol{\omega} \in \Omega} P_{\boldsymbol{\omega},0}} \frac{t_i-t_{i_{\text{LAO}}}}{i-i_{\text{LAO}}}\right\}$ is negligible and we can write for the average probability of sampling a new point as observer
$P_s \approx E \left\{ \frac{k^2}{Tx} \frac{\sum_{\boldsymbol{\omega} \in \mathcal N} P_{\boldsymbol{\omega},0}}{\sum_{\boldsymbol{\omega} \in \Omega} P_{\boldsymbol{\omega},0}} \frac{t_i-t_{i_{\text{LAO}}}}{i-i_{\text{LAO}}}\right\}$. Under the same assumption, we observe that the term $\frac{t_i-t_{i_{\text{LAO}}}}{i-i_{\text{LAO}}}$ depends on the current time, but, since points belonging to different neighborhoods arrive in an interleaved manner, does not depend on a point's neighborhood. Since $\frac{\sum_{\boldsymbol{\omega} \in \mathcal N } P_{\boldsymbol{\omega},0}}{\sum_{\boldsymbol{\omega} \in \Omega} P_{\boldsymbol{\omega},0}}$ does not depend on time, we can split the term to $P_s \approx E \left\{ \frac{k^2}{Tx} \frac{\sum_{\boldsymbol{\omega} \in \mathcal N } P_{\boldsymbol{\omega},0}}{\sum_{\boldsymbol{\omega} \in \Omega} P_{\boldsymbol{\omega},0}} \right\} E\left\{ \frac{t_i-t_{i_{\text{LAO}}}}{i-i_{\text{LAO}}}\right\}$ due to stochastic independence of both terms.
$\sum_{\boldsymbol{\omega} \in \mathcal N } P_{\boldsymbol{\omega},0} /x$ expresses the average observation count in the current neighborhood. The algorithm implements several mechanisms to make the observer density agree with the time-averaged point density, rendering the time-averaged local average observation count $E\left\{\sum_{\boldsymbol{\omega} \in \mathcal N } P_{\boldsymbol{\omega},0} /x\right\}$ equal to the total average observation count of all observers $\sum_{\boldsymbol{\omega} \in \Omega } P_{\boldsymbol{\omega},0} /k$, hence $P_s \approx \frac{k}{T}E\left\{\frac{t_i-t_{i_{\text{LAO}}}}{i-i_{\text{LAO}}}\right\}=\frac{k}{T}\overline{\text{IAT}}$, where the average inter-arrival time of two data points is termed $\overline{\text{IAT}}$.
During a time period of $T$, $T/\overline{\text{IAT}}$ data points arrive, yielding an average number of sampled points of $P_sT/\overline{\text{IAT}}=k$. \, \IEEEQEDhere

\end{document}